\pgfplotsset{
  compat=1.3,
}
\tikzset{
    ->, 
    node distance=3cm, 
    every state/.style={thick}, 
}
\newcommand{\mypara}[1]{\vspace{0pt}\noindent\textbf{#1}~~~}
\newtheorem{definition}{Definition}
\newtheorem{proposition}{Proposition}
\newtheorem{corollary}{Corollary}
\title{On the Expressivity of Multidimensional Markov Reward}
\author{
Shuwa Miura\\
Manning College of Information and Computer Sciences\\
University of Massachusetts Amherst\\
Amherst, MA 01003 \\
\texttt{smiura@cs.umass.edu} \\
}
\begin{document}

\maketitle

\begin{abstract}
We consider the expressivity of Markov rewards in sequential decision making under uncertainty.
We view reward functions in Markov Decision Processes (MDPs) as a means to characterize desired behaviors of agents.
Assuming desired behaviors are specified as a set of acceptable policies, we investigate if there exists a scalar or multidimensional Markov reward function that makes the policies in the set more desirable than the other policies.
Our main result states both necessary and sufficient conditions for the existence of such reward functions. 
We also show that for every non-degenerate set of deterministic policies, 
there exists a multidimensional Markov reward function that characterizes it.
\end{abstract}

\keywords{
  Reinforcement Learning, Reward, Reward Hypothesis, Markov Decision Process, Multidimensional Reward
}

\acknowledgements{
This research was supported by the National Science Foundation grants number IIS-1724101 and IIS-1954782.
We would like to thank Connor Basich, Saaduddin Mahmud, and Shlomo Zilberstein at University of Massachusetts Amherst for their  helpful comments and feedback on this work.
}

\startmain 

\section{Introduction}
Reward functions direct behaviors of agents. In some problems such as games, rewards are available for agents by default (e.g. as scores). For other problems, however, there are no reward signals inherently available in the environment. Rather, rewards are designed by the designer of the agent as a means to produce desired behaviors.

The vast majority of previous work on sequential decision making under uncertainty has focused on MDPs with scalar Markov rewards. 
Similarly, most Inverse Reinforcement Learning (IRL) \cite{abbeel_apprenticeship_2004} techniques learn scalar Markov reward functions from behaviors.
The implicit assumption is that scalar Markov rewards are sufficient to characterize desired behaviors of agents (Reward Hypothesis \cite{sutton_reward_2004}).

On the other hand, many real-world problems often contain multiple competing objectives and seem to require multidimensional reward functions \cite{roijers_multi-objective_2017}.
For example, an autonomous vehicle might want to minimize both the time to reach its destination and fuel consumption.
While the problem can intuitively be formulated using a two-dimensional reward function, there is little formal study on exactly when we need more than scalar Markov rewards.
One could still argue that a carefully engineered scalar reward function can represent the desired behaviors (e.g. via linear scalarization), but it is not clear when scalarization is possible.

This paper provides an initial direction to study expressivity of multidimensional Markov reward. 
Previous work \cite{abel_expressivity_2021} recently pointed out that some set of desired behaviors in Markov environment cannot be characterized by scalar Markov rewards.
Our work extends
the previous work \cite{abel_expressivity_2021} and asks the question: given a Markov environment and a specification for desired behaviors, does there exist a multidimensional Markov reward function that characterizes the desired behaviors?
Our main result states both necessary and sufficient conditions for the existence of scalar and multidimensional Markov rewards that characterize a given set of acceptable policies (Proposition 1 and 2). 
 We also show that for every non-degenerate set of deterministic policies, 
there exists a multidimensional Markov reward function that characterizes it (Corollary 1).

\section{Background}
\mypara{MDP} A finite MDP is a tuple $E=\langle S, A, T, \gamma, s_0, R \rangle$ where: $S$ is finite a set of states, $A$ is a finite set of actions, $T: S \times A \times S \rightarrow [0, 1]$ is a transition function, $\gamma$ is a discount factor, $s_0$ is the initial state, and $R: S \times A \rightarrow \mathbb{R}$ is a reward function.
We use $\bm{r}$ to denote a vector of size $(|S| \times |A|)$ representing $R$.
We refer to an MDP without a reward function as a Markov environment
($E=\langle S, A, T, \gamma, s_0 \rangle$).
A solution to an MDP is a \emph{policy}.
In this paper, We only consider \emph{stationary policies}, policies that do not change over time.
A \emph{deterministic policy} $\pi$ maps a state $s$ to an action $a \in A$.
A \emph{stochastic policy} $\pi$ maps a state $s$ to a probability distribution over $A$.
A policy $\pi$ induces a value function $V^{\pi}(s) = \mathbb{E}[\sum_{t=0}^{\infty} \gamma^t R(S_t, A_t)| S_0=s, \pi]$, which represents the expected discounted sum of rewards from $s_0$ by following $\pi$.
An \emph{optimal policy} $\pi^{*}$ is a policy that maximizes $V^{\pi}(s_0)$.
A \emph{discounted expected state-action visitation} of a policy $\pi$ is defined as $\rho^{\pi}(s, a) = \mathbb{E}[\sum_{t=0}^{\infty} \gamma^t Pr(S_t=s, A_t=a)| S_0=s_0, \pi]$.
We use $\bm{\rho}^{\pi}$ to denote a vector of 
$\rho^{\pi}(s, a)$.
We have $V^{\pi}(s_0) = \sum_{s \in S, a \in A} R(s,a) \rho^{\pi}(s, a)=\bm{r}^\top \bm{\rho}^{\pi}$.
$(1-\gamma) \bm{\rho}^{\pi}$
is called an \emph{occupancy measure}.

\mypara{Reward Design Problem} To formally investigate the expressivity of various reward functions, we ask questions of the form: 
given a Markov environment and a specification for desired behaviors, does there exist a Markov reward function that characterizes the desired behaviors? Previous work \cite{abel_expressivity_2021} explored reward design problems for different forms of specifications for desired behaviors.
Among the different specifications considered in the previous work, we focus on the case where a finite set of acceptable (good) policies and unacceptable (bad) policies ($\langle \Pi_G, \Pi_B \rangle$) are given. We require that $\Pi_G$ and $\Pi_B$ are disjoint, and call the pair SOAP as in the original paper \cite{abel_expressivity_2021}.
We say a scalar Markov reward function $R$ realizes $\langle E, \Pi_G, \Pi_B \rangle$ if every policy in $\Pi_{G}$ is optimal while every policy in $\Pi_B$ is not.

\begin{definition}[Scalar Reward Design Problem for SOAP]
  Given a Markov environment $E$ and SOAP $\langle \Pi_G, \Pi_B \rangle$, is there a reward function $R: S \times A \rightarrow \mathbb{R}$ such that $R$ realizes $\langle E, \Pi_G, \Pi_B \rangle$?
  \label{def:soap}
\end{definition}

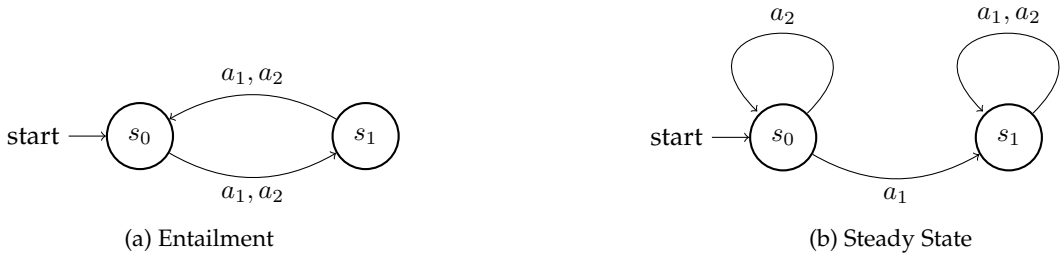
\begin{figure}[ht] 
    \centering 
    \begin{subfigure}[b]{0.49\linewidth}
        \centering
      \begin{tikzpicture}
          \node[initial, state] (q1) {$s_0$};
          \node[state, right of=q1] (q2) {$s_1$};
          \draw 
          (q2) edge[bend right, above] node{$a_1, a_2$} (q1)
          (q1) edge[bend right, below] node{$a_1, a_2$} (q2);
      \end{tikzpicture}
      \caption{Entailment}
      \label{fig:xor}
    \end{subfigure}
    \begin{subfigure}[b]{0.49\linewidth}
        \centering
      \begin{tikzpicture}
          \node[initial, state] (q1) {$s_0$};
          \node[state, right of=q1] (q2) {$s_1$};
          \draw 
          (q1) edge[loop, above] node{$a_2$} (q1)
          (q2) edge[loop, above] node{$a_1, a_2$} (q2)
          (q1) edge[bend right, below] node{$a_1$} (q2);
      \end{tikzpicture}
      \caption{Steady State}
\label{fig:degenerate}
    \end{subfigure}
      \caption{Examples of Markov environments from previous work \cite{abel_expressivity_2021}. }
    \label{fig:randomization}

\end{figure}

For example, consider the environment in Figure~\ref{fig:xor}. Let $\pi_{ij}$ be a deterministic policy taking $a_i$ at $s_0$ and $a_j$ at $s_1$. When $\Pi_G = \{ \pi_{11}\}$ and $\Pi_B= \{\pi_{12}, \pi_{21}, \pi_{22}\}$ (taking $a_1$ is always better), a reward function that sets $R(s_0, a_1)$ and $R(s_1, a_1)$ high realizes the given SOAP.
On the other hand, previous work \cite{abel_expressivity_2021} pointed out that when 
$\Pi_G = \{ \pi_{12}, \pi_{21}\}$ and $\Pi_B = \{ \pi_{11}, \pi_{22}\}$, there exists no scalar Markov reward function that characterizes the set of acceptable policies.
To make both 
$\pi_{12}$ and $\pi_{21}$ optimal, we would have to make 
$\pi_{11}$ and $\pi_{22}$ optimal as well.

Figure~\ref{fig:degenerate} shows another example where scalar Markov reward is not enough.
When $\pi_{21} \in \Pi_G$ and $\pi_{22} \in \Pi_B$, as the agent never visits $s_1$, $\pi_{21}$ cannot be better than $\pi_{22}$. 

Note that Definition~\ref{def:soap} is a generalization of the original definition \cite{abel_expressivity_2021}. The original definition assumes $\Pi_{B} = \overline{\Pi_{G}}$ and policies are deterministic.
Previous work \cite{abel_expressivity_2021}
proposed an LP formulation to solve the scalar reward design problem.

\section{Reward Design Problems for Multidimensional Markov Rewards}
While the previous work focused on scalar Markov reward setting, we now consider characterizing a set of acceptable policies using multidimensional Markov rewards.
First, we define consistency of SOAP to exclude degenerate problem instances like the one in Figure~\ref{fig:degenerate}.

\begin{definition}[Consistency of SOAP]
  A SOAP $\langle \Pi_G, \Pi_B \rangle$ is consistent iff for all $\pi_a \in \Pi_{G}$ and $\pi_b \in \Pi_{B}$, $\bm{\rho}^{\pi_a} \neq \bm{\rho}^{\pi_b}$.
\end{definition}

In other words, SOAP is consistent when policies with the same discounted expected state-action visitations are preferred equally.
For example, in Figure~\ref{fig:degenerate}, as $\bm{\rho}^{\pi_{21}} = \bm{\rho}^{\pi_{22}}$ , $\langle \Pi_G, \Pi_B \rangle =\langle \{\pi_{21}\}, \{\pi_{11}, \pi_{12}, \pi_{22}\} \}\rangle$ is not consistent.

With the presence of multiple objectives, there could be different ways to characterize acceptable policies.
One common approach is to use a set of Pareto-optimal policies \cite{roijers_multi-objective_2017}.
Another approach is to use Constrained Markov Decision Processes (CMDPs) \cite{altman_constrained_1999}, and characterize acceptable policies as those satisfying constraints.
In this paper, we use a constraint-based formulation for the ease of analysis.
We leave the Pareto-based approach to future work.

\begin{definition}
  Given $\langle E, R, \mathbf{c}\rangle$ where 
  $E$ is a Markov environment,
$\mathbf{c} \in \mathbb{R}^d$ is a lower-bound vector, and $R: S \times A \rightarrow \mathbb{R}^d$ is a $d$-dimensional reward function,
a policy $\pi$ is feasible for 
$\langle E, R, \mathbf{c}\rangle$ iff 
for all $i \in [d]$, $V^{\pi}_i(s_0) \geq c_i$.
\end{definition}

In other words, feasibility is defined as in CMDPs. But as we are only interested in characterizing a set of acceptable policies, we do not consider the maximization aspect of CMDPs.
For example, consider again the environment in Figure~\ref{fig:xor}.
Let $R(s_0,a_1) = R(s_1,a_1)= \begin{bmatrix}
0\\
0
\end{bmatrix}
$, 
$R(s_0,a_2) = R(s_1, a_2) = \begin{bmatrix}
1\\
-1
\end{bmatrix}
$, and
$\mathbf{c} = \begin{bmatrix}
2\\
-8
\end{bmatrix}
$.
Then as in Figure~\ref{fig:xor_plot}, $\pi_{12}$ and $\pi_{21}$ are feasible policies while $\pi_{11}$ and $\pi_{22}$ are infeasible.
We say $\langle E, R, \mathbf{c}\rangle$ \emph{realizes} $\langle \Pi_G, \Pi_B \rangle$ if every policy in $\Pi_G$ is feasible and every policy in $\Pi_B$ is infeasible.
The example shows that $\langle \Pi_G, \Pi_B \rangle = \langle \{\pi_{12}, \pi_{21}\}, \{\pi_{11}, \pi_{22}\}\rangle$ can be realized with $d=2$.
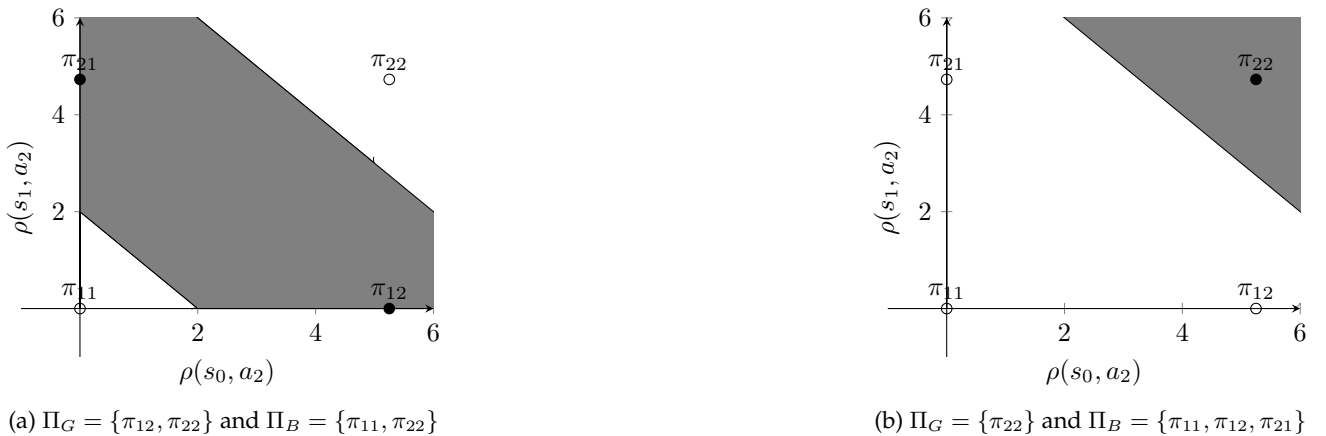
\begin{figure}[h]
  \centering
\begin{subfigure}{.38\linewidth}
      \centering
  \begin{tikzpicture}
      \begin{axis}[
        width=\linewidth,
        xmin = -1,
        xmax = 6,
        ymin = -1,
        ymax = 6,
        axis lines = center,
        area style,
        xlabel = {\(\rho(s_0, a_2)\)},
        ylabel = {\(\rho(s_1, a_2)\)},
        ylabel near ticks,
        xlabel near ticks,
    ]

    \addplot[
      color=black,
      ] coordinates{(0, 2) (2, 0)};

    \addplot[
      color=black,
      ]
      {
      8-x
      };
      
    \addplot[fill=gray] coordinates
    {(0,2) (2,0) (8,0) (0,8)}
    \closedcycle;  

    \addplot+ [nodes near coords,only marks,
    point meta=explicit symbolic, color=black, mark=o]
    table [meta=label] {
    x y label
    0 0 $\pi_{11}$
    5.25 4.73 $\pi_{22}$
    };
    
    \addplot+ [nodes near coords,only marks,
    point meta=explicit symbolic, color=black, mark=*]
    table [meta=label] {
    x y label
    0 4.73 $\pi_{21}$
    5.25 0.0 $\pi_{12}$
    };
    
      \end{axis}
  \end{tikzpicture}
  \caption{$\Pi_G = \{\pi_{12}, \pi_{22}\}$ and $\Pi_B = \{\pi_{11}, \pi_{22}\}$ }
  \label{fig:xor_plot}
\end{subfigure}%
  \hfill%
\begin{subfigure}{.38\linewidth}
      \centering
  \begin{tikzpicture}
      \begin{axis}[
        width=\linewidth,
        xmin = -1,
        xmax = 6,
        ymin = -1,
        ymax = 6,
        axis lines = center,
        area style,
        xlabel = {\(\rho(s_0, a_2)\)},
        ylabel = {\(\rho(s_1, a_2)\)},
        ylabel near ticks,
        xlabel near ticks,
    ]

    \addplot[fill=gray] coordinates
    {(0,50) (50,0) (8,0) (0,8)}
    \closedcycle;  

    \addplot+ [nodes near coords,only marks,
    point meta=explicit symbolic, color=black, mark=*]
    table [meta=label] {
    x y label
    5.25 4.73 $\pi_{22}$
    };
    
    \addplot+ [nodes near coords,only marks,
    point meta=explicit symbolic, color=black, mark=o]
    table [meta=label] {
    x y label
    0 0 $\pi_{11}$
    0 4.73 $\pi_{21}$
    5.25 0.0 $\pi_{12}$
    };
    
      \end{axis}
  \end{tikzpicture}
  \caption{$\Pi_G = \{\pi_{22}\}$ and $\Pi_B = \{\pi_{11}, \pi_{12}, \pi_{21}\}$ }
  \label{fig:lin_separatable}
\end{subfigure}
\caption{Visualization of policies for Figure~\ref{fig:xor} in discounted expected state-action visitation space with $\gamma=0.9$. Black dots represent policies in $\Pi_G$. White dots represent policies in $\Pi_B$. The grey area represents the feasible region.}
\end{figure}

The problem we consider then is:
\begin{definition}[Multidimensional Reward Design Problem for SOAP]
  Given a Markov environment $E$ and SOAP $\langle \Pi_G, \Pi_B \rangle$, is there $d \in \mathbb{N}$, 
$\mathbf{c} \in \mathbb{R}^d$, and $R: S \times A \rightarrow \mathbb{R}^d$ 
such that $\langle R, \mathbf{c} \rangle$ realizes $\langle E, \Pi_G, \Pi_B \rangle$?
  \label{def:multi_soap}
\end{definition}

We now argue that multidimensional reward design problems are equivalent to \emph{polyhedral separability} problems \cite{megiddo_complexity_1988}.
Let $\bm{r}_i$ be a vector of size $(|S| \times |A|)$ representing $i$-th dimensional rewards and  $\bm{R} = \begin{bmatrix}
  \bm{r}_1^\top \\
  \vdots \\
  \bm{r}_d^\top \\
\end{bmatrix}$. Then as $V^{\pi}_i(s_0) = \ \bm{r}_i^\top \bm{\rho}^{\pi}$, a policy $\pi$ is feasible iff $\bm{R}\bm{\rho}^{\pi} \geq \bm{c}$.
In other words, a policy $\pi$ is feasible iff it is within the polyhedron $\{\bm{x} \in \mathbb{R}^{|S|\times |A|}| \bm{R}\bm{x} \geq \bm{c}\}$, where each dimension of the reward function defines a hyperplane.
Let $\mathrm{P}$ be the set of discounted expected state-action visitation corresponding to a set of policies $\Pi$.
Now given $\langle \Pi_G, \Pi_B \rangle$, asking if there exists $d$-dimensional 
$\langle R, \mathbf{c}\rangle$ that realizes
$\langle E, \Pi_G, \Pi_B \rangle$ is equivalent to asking if there exists a polyhedron using $d$ hyperplanes that separates $\mathrm{P}_G$ and $\mathrm{P}_B$.
We denote the convex hull of $\mathrm{P}$ with $k$ elements as 
 $\textup{conv}(\mathrm{P}) = \{\sum_{i=1}^k \lambda_i \bm{\rho}_i| \bm{\rho}_i \in \mathrm{P}, \sum_{i=1}^k \lambda_i = 1, \lambda_i \in \mathbb{R}_{\geq 0} \}$.
From Proposition 2.2 on polyhedral separability in \cite{astorino_polyhedral_2002}, we immediately get the following result:

\begin{proposition}
  Given a Markov environment $E$ and
  SOAP $\langle \Pi_G, \Pi_B \rangle$, 
  there exists $d \leq |\Pi_B|$,
$\mathbf{c} \in \mathbb{R}^d$, and $R: S \times A \rightarrow \mathbb{R}^d$
that realizes $\langle E, \Pi_G, \Pi_B \rangle$ iff $\textup{conv}(\mathrm{P}_G) \cap \mathrm{P}_B = \emptyset$.
\end{proposition}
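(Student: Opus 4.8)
The plan is to leverage the equivalence, already established in the text, between realizing the SOAP and separating the finite point sets $\mathrm{P}_G$ and $\mathrm{P}_B$ by a polyhedron: a pair $\langle R, \mathbf{c}\rangle$ realizes $\langle E, \Pi_G, \Pi_B\rangle$ exactly when the closed polyhedron $Q = \{\bm{x} : \bm{R}\bm{x} \geq \bm{c}\}$ contains every point of $\mathrm{P}_G$ and excludes every point of $\mathrm{P}_B$. I would prove the two directions of the biconditional separately, invoking the cited polyhedral separability result (Proposition~2.2 in \cite{astorino_polyhedral_2002}) to supply the separation machinery, since both directions become short once this reformulation is in place.

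For the forward direction ($\Rightarrow$), suppose some $\langle R, \mathbf{c}\rangle$ realizes the SOAP and let $Q$ be the associated feasible polyhedron. Since $Q$ is an intersection of halfspaces it is convex, and it contains $\mathrm{P}_G$ by feasibility of $\Pi_G$; hence $\textup{conv}(\mathrm{P}_G) \subseteq Q$. Infeasibility of $\Pi_B$ means $\mathrm{P}_B \cap Q = \emptyset$, so $\textup{conv}(\mathrm{P}_G) \cap \mathrm{P}_B \subseteq Q \cap \mathrm{P}_B = \emptyset$. This direction uses no separation theorem at all, only convexity of $Q$.

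For the reverse direction ($\Leftarrow$), assume $\textup{conv}(\mathrm{P}_G) \cap \mathrm{P}_B = \emptyset$. The set $\textup{conv}(\mathrm{P}_G)$ is the convex hull of finitely many points, hence a compact convex polytope. I would then, for each distinct visitation vector $\bm{\rho} \in \mathrm{P}_B$, apply the separating hyperplane theorem to the point $\bm{\rho}$ and the compact convex set $\textup{conv}(\mathrm{P}_G)$ to obtain a halfspace $\{\bm{x} : \bm{r}_{\bm{\rho}}^\top \bm{x} \geq c_{\bm{\rho}}\}$ that contains $\textup{conv}(\mathrm{P}_G)$ but strictly excludes $\bm{\rho}$. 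Stacking these rows gives $\bm{R}$ and $\bm{c}$; the resulting polyhedron contains all of $\mathrm{P}_G$ (each good point lies in every halfspace) while each bad point violates its own constraint, so $\langle R, \mathbf{c}\rangle$ realizes the SOAP. The number of hyperplanes equals $|\mathrm{P}_B|$, the number of distinct visitation vectors among $\Pi_B$, which is at most $|\Pi_B|$, yielding the bound $d \leq |\Pi_B|$.

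The main obstacle I anticipate is the bookkeeping around strict versus non-strict inequalities: feasibility is defined with $\geq$, so good points are allowed on the polyhedron's boundary whereas bad points must strictly violate at least one constraint. This is precisely what strict separation of a point from a compact convex set provides, so the difficulty is resolved by choosing each threshold $c_{\bm{\rho}}$ within the strict-separation margin. I would also remark that consistency of the SOAP need not be assumed as a hypothesis, since an inconsistent SOAP forces some $\bm{\rho}^{\pi_b} \in \mathrm{P}_B$ to coincide with a point of $\mathrm{P}_G \subseteq \textup{conv}(\mathrm{P}_G)$, making both the separation condition and the existence of a realization fail at once.
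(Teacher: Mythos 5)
Your proposal is correct and follows essentially the same route as the paper: the paper reduces realization to polyhedral separability of $\mathrm{P}_G$ and $\mathrm{P}_B$, cites Proposition~2.2 of the Astorino--Gaudioso reference, and sketches exactly your construction (one separating hyperplane per bad policy's visitation vector, giving $d \leq |\Pi_B|$). You merely spell out the details --- convexity of the feasible polyhedron for the forward direction and strict separation of each $\bm{\rho}^{\pi_b}$ from the compact polytope $\textup{conv}(\mathrm{P}_G)$ for the converse --- that the paper leaves to the cited result.
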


Intuitively, if we let $d = |\Pi_{B}|$, for each bad policy $\pi_b$, as $\bm{\rho}^{\pi_b} \not \in \textup{conv}(\mathrm{P}_G)$, we can pick a hyperplane that separates $\bm{\rho}^{\pi_b}$ from $\mathrm{P}_G$.
Note that $d = |\Pi_B|$ is not necessarily the smallest $d$ that realizes a given SOAP.
It is possible that one hyperplane separates multiple bad policies at once.
For example, Figure~\ref{fig:lin_separatable} shows a case where $|\Pi_B|=3$ but $d=1$ is enough to separate $\mathrm{P}_G$ from $\mathrm{P}_B$.
We now consider a special case where all policies are deterministic:

\begin{corollary}
  Given a Markov environment $E$ and
  consistent SOAP $\langle \Pi_G, \Pi_B \rangle$, 
  if all policies in $\langle \Pi_G, \Pi_B \rangle$ are deterministic, there exists $d \leq |\Pi_B|$,
$\mathbf{c} \in \mathbb{R}^d$, and $R: S \times A \rightarrow \mathbb{R}^d$
that realizes $\langle E, \Pi_G, \Pi_B \rangle$.
\end{corollary}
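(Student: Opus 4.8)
The plan is to reduce the statement to the separation criterion of Proposition 1 and then verify that criterion using the polyhedral geometry of visitation vectors. By Proposition 1, a realizing $\langle R, \mathbf{c}\rangle$ with $d \leq |\Pi_B|$ exists if and only if $\textup{conv}(\mathrm{P}_G) \cap \mathrm{P}_B = \emptyset$; so it suffices to establish this disjointness, and the dimension bound then comes for free. To that end, I would first invoke the standard fact that the set $\mathcal{D}$ of all discounted expected state-action visitations achievable in $E$ is a polytope in $\mathbb{R}^{|S|\times|A|}$, namely the solution set of the nonnegativity and Bellman flow constraints, and that every visitation vector $\bm{\rho}^{\pi}$ (for any, possibly stochastic, policy $\pi$) lies in $\mathcal{D}$. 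The key structural property I would use is that the visitation of a \emph{deterministic} policy is an extreme point (vertex) of $\mathcal{D}$.

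Granting this, the core of the argument is a short extreme-point computation. Fix any $\pi_b \in \Pi_B$; since $\pi_b$ is deterministic, $\bm{\rho}^{\pi_b}$ is an extreme point of $\mathcal{D}$. Suppose for contradiction that $\bm{\rho}^{\pi_b} \in \textup{conv}(\mathrm{P}_G)$, so $\bm{\rho}^{\pi_b} = \sum_i \lambda_i \bm{\rho}^{\pi_i}$ with $\pi_i \in \Pi_G$, $\lambda_i > 0$, and $\sum_i \lambda_i = 1$. Each $\bm{\rho}^{\pi_i}$ lies in $\mathcal{D}$, so this expresses an extreme point as a convex combination of points of $\mathcal{D}$; by the defining property of extreme points, $\bm{\rho}^{\pi_i} = \bm{\rho}^{\pi_b}$ for every $i$. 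Thus some good policy shares its visitation with $\pi_b$, contradicting the consistency of $\langle \Pi_G, \Pi_B\rangle$. Since $\pi_b$ was arbitrary, $\textup{conv}(\mathrm{P}_G) \cap \mathrm{P}_B = \emptyset$, and the result follows from Proposition 1.

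The main obstacle is the extreme-point claim for deterministic policies, and this is the one place where determinism is essential. I would discharge it directly rather than by citation: writing the flow constraints as $\{\bm{\rho}\geq 0 : M\bm{\rho}=\bm{b}\}$, the column of $M$ indexed by $(s,a)$ is $\bm{e}_s - \gamma\, T(s,a,\cdot)$, and the $|S|$ columns indexed by the support $\{(s,\pi_b(s))\}$ assemble into the matrix $I - \gamma P_{\pi_b}^{\top}$, where $P_{\pi_b}$ is the (stochastic) transition matrix induced by $\pi_b$. Because $\gamma < 1$, this matrix is nonsingular, so the support columns are linearly independent and $\bm{\rho}^{\pi_b}$ is a basic feasible solution, i.e. a vertex. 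I note in passing that the argument uses only that the policies in $\Pi_B$ are deterministic; determinism of $\Pi_G$ is never invoked, so the hypothesis could in principle be weakened accordingly.
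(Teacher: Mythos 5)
Your proof follows essentially the same route as the paper's own sketch: invoke Proposition 1, observe that each deterministic $\pi_b$ yields an extreme point of the visitation polytope, and use consistency to rule out $\bm{\rho}^{\pi_b}\in\textup{conv}(\mathrm{P}_G)$. The only difference is that you additionally discharge the extreme-point claim via a basic-feasible-solution argument (and correctly note that only determinism of $\Pi_B$ is needed), whereas the paper simply asserts it; both steps are sound.
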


\begin{proof}[Proof Sketch]
Consistency of the SOAP ensures that 
$\mathrm{P}_G$ and $\mathrm{P}_B$ are disjoint. 
Let $\mathrm{P}$ be the set of all possible discounted expected state-action visitations for the given environment.
As each $\pi_b \in \Pi_B$ is a deterministic policy, $\bm{\rho}^{\pi_b}$ is an extreme point of $\mathrm{P}$.
That is, 
$\bm{\rho}^{\pi_b}$ cannot be written as a convex combination of other points in $\mathrm{P}$.
So for each $\pi_b \in \Pi_B$, we have $\bm{\rho}^{\pi_b} \not \in \textup{conv}(\mathrm{P}_G)$, which implies $\textup{conv}(\mathrm{P}_G) \cap \mathrm{P}_B = \emptyset$.
\end{proof}

An interesting question is whether there exists a scalar Markov reward function ($d=1$) that realizes a given SOAP.
In other words, we are asking if there exists a hyperplane that separates $\mathrm{P}_G$ from $\mathrm{P}_B$.
From Theorem 3 in \cite{mangasarian_linear_1965}, we get: 
\begin{proposition}
Given a Markov environment $E$ and
SOAP $\langle \Pi_G, \Pi_B \rangle$, 
there exists 
$c \in \mathbb{R}$, and $R: S \times A \rightarrow \mathbb{R}$
that realizes $\langle E, \Pi_G, \Pi_B \rangle$
iff
$\textup{conv}(\mathrm{P}_G) \cap \textup{conv}(\mathrm{P}_B) = \emptyset$.
\end{proposition}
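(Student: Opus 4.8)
The plan is to read Proposition 2 as the $d=1$ specialization of the separability correspondence already set up before Proposition 1: asking for a scalar reward $R$ (represented by $\bm{r}$) and threshold $c$ that realize $\langle E, \Pi_G, \Pi_B\rangle$ is exactly asking for a single hyperplane $\{\bm{x} : \bm{r}^\top \bm{x} = c\}$ that places every point of $\mathrm{P}_G$ in the closed halfspace $\bm{r}^\top \bm{x} \geq c$ and every point of $\mathrm{P}_B$ in the open halfspace $\bm{r}^\top \bm{x} < c$, since feasibility of $\pi$ is $V^{\pi}(s_0) = \bm{r}^\top \bm{\rho}^{\pi} \geq c$. I would then prove the two directions separately.

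For the ``only if'' direction I would assume such $\bm{r}, c$ exist and deduce disjointness of the hulls by linearity. Suppose for contradiction some $\bm{x} \in \textup{conv}(\mathrm{P}_G) \cap \textup{conv}(\mathrm{P}_B)$. Writing $\bm{x}$ as a convex combination of good visitations and applying $\bm{r}^\top$, each term obeys $\bm{r}^\top \bm{\rho}^{\pi_a} \geq c$, so $\bm{r}^\top \bm{x} \geq c$. Writing the same $\bm{x}$ as a convex combination of bad visitations, each term obeys $\bm{r}^\top \bm{\rho}^{\pi_b} < c$, and since the weights are nonnegative and sum to one (with at least one positive) this forces $\bm{r}^\top \bm{x} < c$. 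These contradict, so the hulls are disjoint.

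For the ``if'' direction I would assume $\textup{conv}(\mathrm{P}_G) \cap \textup{conv}(\mathrm{P}_B) = \emptyset$ and invoke the separating hyperplane theorem. Because the SOAP is finite, $\mathrm{P}_G$ and $\mathrm{P}_B$ are finite point sets and their convex hulls are compact polytopes; two disjoint compact convex sets admit \emph{strong} separation (Theorem 3 in \cite{mangasarian_linear_1965}), giving $\bm{r} \neq \bm{0}$ and scalars $c_1 > c_2$ with $\bm{r}^\top \bm{x} \geq c_1$ on $\textup{conv}(\mathrm{P}_G)$ and $\bm{r}^\top \bm{x} \leq c_2$ on $\textup{conv}(\mathrm{P}_B)$. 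Choosing any $c$ with $c_2 < c < c_1$ and letting $R$ be the reward represented by $\bm{r}$, every $\pi_a \in \Pi_G$ has $\bm{r}^\top \bm{\rho}^{\pi_a} \geq c_1 > c$ (feasible) while every $\pi_b \in \Pi_B$ has $\bm{r}^\top \bm{\rho}^{\pi_b} \leq c_2 < c$ (infeasible), so $\langle E, R, c\rangle$ realizes the SOAP.

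The main obstacle I anticipate is the asymmetry in the definition of feasibility: the good set must satisfy a closed inequality ($\geq c$) while the bad set must satisfy a strict one ($< c$). A merely weak separating hyperplane would not suffice, since it could leave a bad point sitting exactly on the boundary and therefore counted as feasible. The point to get right is that finiteness makes both hulls compact, so the separation can be taken to be strong with a positive gap $c_1 > c_2$; that gap is precisely what lets a single threshold $c$ meet the closed constraint for $\mathrm{P}_G$ and the strict constraint for $\mathrm{P}_B$ at the same time.
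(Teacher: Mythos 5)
Your proposal is correct and follows essentially the same route as the paper, which obtains this proposition directly from the hyperplane/visitation correspondence together with Theorem 3 of the cited work of Mangasarian on linear separability of finite point sets. You simply unpack the details of that citation (the convexity argument for the forward direction and strong separation of disjoint compact polytopes for the converse), and your attention to the closed-versus-strict inequality asymmetry is the right point to be careful about.
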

Note that the setting is slightly different from that of Definition~\ref{def:soap}.
Every policy $\pi_a \in \Pi_G$ must satisfy $V^{\pi_a}(s_0) \geq c$, but does not have to be optimal.
If $R$ makes every $\pi_a \in \Pi_G$ optimal, we can easily characterize $\Pi_G$
as a set of feasible policies with the same $R$ and $c=V^{*}(s_0)$.
However, the converse is not true in general\footnote{This is essentially difference between range-SOAP and equal-SOAP described in the previous work \cite{abel_expressivity_2021}}.

%
\section{Conclusions and Future Work}
We have presented our initial attempts to investigate the expressivity of multidimensional Markov rewards, extending the recent work \cite{abel_expressivity_2021}.
Our main result shows both necessary and sufficient conditions for the existence of scalar and multidimensional Markov rewards that realizes a given set of acceptable policies (Proposition 1 and 2).
We additionally show that a set of deterministic policies can always be characterized by some multidimensional Markov rewards (Corollary 1).

We foresee the following directions for future research:

\mypara{More General Task Specifications}
This paper assumes that desired behaviors of agents are specified as a set of acceptable policies.
A natural direction forward is to extend our results to general preferences among policies and trajectories.

\mypara{Algorithm and Complexity}
While our results state conditions under which multidimensional Markov rewards realize a given set of acceptable policies, we did not provide an algorithm to check the conditions.
We plan to develop the algorithm based on ideas from \cite{astorino_polyhedral_2002}
and analyze the computational complexity of the problem.

\mypara{Preference-based IRL using Multidimensional Rewards}
We plan to extend our idea to preference-based IRL \cite{wirth_survey_2017,brown_extrapolating_2019} using multidimensional rewards.
The binary classification problem we considered above can be thought of as a special case of preference learning, where the preference forms only two classes.
Our preference-based IRL approach would use a loss function inspired by separability of policies.

\mypara{Decision Theory}
Exploring expressivity of rewards has a strong connection to classical decision theory.
In particular, SOAPs considered in this work can thought of as
violating the Independence Axiom from the VNM Theorem \cite{von_neumann_theory_1944}.
We plan to explore the connection between bounded rationality of the designer and expressivity of Markov rewards.
There is some previous work \cite{sunehag_axioms_2011,pitis_rethinking_2019} that explored the connection between reinforcement learning and decision theory. 
\clearpage
\bibliographystyle{plain}
\bibliography{references}
\end{document}